\title{Rethinking the Construction of Effective Metrics for Understanding the Mechanisms of Pretrained Language Models}
\newcommand*\samethanks[1][\value{footnote}]{\footnotemark[#1]}
\newcommand \footnoteONLYtext[1]
	\let \mybackup \thefootnote
	\let \thefootnote \relax
	\let \thefootnote \mybackup
	\let \mybackup \imareallyundefinedcommand
\author{You Li\samethanks[1] \quad Jinhui Yin\samethanks[1] \quad Yuming Lin\samethanks[2] \\
  Guangxi Key Laboratory of Trusted Software, Guilin University of Electronic Technology \\
  \texttt{liyou@guet.edu.cn} \quad \texttt{yinjinhui55@gmail.com} \quad \texttt{ymlin@guet.edu.cn}\\
}
\begin{document}
\maketitle
\footnoteONLYtext{*Equal contribution.}
\footnoteONLYtext{\dag Corresponding Author.}
\begin{abstract}
Pretrained language models are expected to effectively map input text to a set of vectors while preserving the inherent relationships within the text. Consequently, designing a white-box model to compute metrics that reflect the presence of specific internal relations in these vectors has become a common approach for post-hoc interpretability analysis of pretrained language models. However, achieving interpretability in white-box models and ensuring the rigor of metric computation becomes challenging when the source model lacks inherent interpretability. Therefore, in this paper, we discuss striking a balance in this trade-off and propose a novel line to constructing metrics for understanding the mechanisms of pretrained language models. We have specifically designed a family of metrics along this line of investigation, and the model used to compute these metrics is referred to as the tree topological probe. We conducted measurements on BERT-large by using these metrics. Based on the experimental results, we propose a speculation regarding the working mechanism of BERT-like pretrained language models, as well as a strategy for enhancing fine-tuning performance by leveraging the topological probe to improve specific submodules.\footnote{Our code is available at \url{https://github.com/cclx/Effective_Metrics}}

\end{abstract}

\section{Introduction}
\label{sec:introduction}

Pretrained language models consisting of stacked transformer blocks \citep{DBLP:conf/nips/VaswaniSPUJGKP17} are commonly expected to map input text to a set of vectors, such that any relationship in the text corresponds to some algebraic operation on these vectors. However, it is generally unknown whether such operations exist. Therefore, designing a white-box model that computes a metric for a given set of vectors corresponding to a text, which reflects to some extent the existence of operations extracting specific information from the vectors, is a common approach for post-hoc interpretability analysis of such models \citep{DBLP:conf/acl/MaudslayVPWC20,DBLP:conf/acl/LimisiewiczM20,DBLP:conf/iclr/ChenFXXTCJ21,DBLP:conf/naacl/WhitePSC21, DBLP:conf/acl/ImmerHFC22}. However, even though we may desire strong interpretability from a white-box model and metrics computed by it that rigorously reflect the ability to extract specific information from a given set of vectors, it can be challenging to achieve both of these aspects simultaneously when the source model lacks inherent interpretability. Therefore, making implicit assumptions during metric computation is common \citep{DBLP:conf/icml/Kornblith0LH19, DBLP:conf/acl/WangK022}. A simple example is the cosine similarity of contextual embeddings. This metric is straightforward and has an intuitive geometric interpretation, making it easy to explain, but it tends to underestimate the similarity of high-frequency words \citep{DBLP:conf/acl/ZhouECJ22}.

On the other hand, due to the intuition that 'if a white-box model cannot distinguish embeddings that exhibit practical differences (such as context embeddings and static embeddings), it should be considered ineffective,' experimental validation of a white-box model's ability to effectively distinguish between embeddings with evident practical distinctions is a common practice in research. Furthermore, if the magnitude of metrics computed by a white-box model strongly correlates with the quality of different embeddings in practical settings, researchers usually trust its effectiveness. Therefore, in practice, traditional white-box models actually classify sets of vectors from different sources.

Taking the structural probe proposed by \citeauthor{DBLP:conf/naacl/HewittM19} as an example, they perform a linear transformation on the embedding of each complete word in the text and use the square of the L2 norm of the transformed vector as a prediction for the depth of the corresponding word in the dependency tree \citep{DBLP:conf/naacl/HewittM19}. In this way, the linear transformation matrix serves as a learning parameter, and the minimum risk loss between the predicted and true depths is used as a metric. Intuitively, the smaller the metric is, the more likely the embedding contains complete syntax relations. The experimental results indeed align with this intuition, showing that contextual embeddings (such as those generated by BERT \citep{DBLP:conf/naacl/DevlinCLT19}) outperform static embeddings. However, due to the unknown nature of the true deep distribution, it is challenging to deduce which geometric features within the representations influence the magnitude of structural probe measurements from the setup of structural probe. Overall, while the results of the structural probe provide an intuition that contextual embeddings, such as those generated by BERT, capture richer syntactic relations than those of the traditional embeddings, it is currently impossible to know what the geometric structure of a "good" embedding is for the metric defined by the structural probe.

In addition, to enhance the interpretability and flexibility of white-box models, it is common to include assumptions that are challenging to empirically validate. For example, \citeauthor{DBLP:conf/emnlp/Ethayarajh19} proposed to use anisotropy-adjusted self-similarity to measure the context-specificity of embeddings \citep{DBLP:conf/emnlp/Ethayarajh19}. Since the computation of this metric doesn't require the introduction of additional human labels, it is theoretically possible to conduct further analysis, such as examining how fundamental geometric features in the representation (e.g., rank) affect anisotropy-adjusted self-similarity, or simply consider this metric as defining a new geometric feature. Overall, this is a metric that can be discussed purely at the mathematical level. However, verifying whether the measured context-specificity in this metric aligns well with context-specificity in linguistics, without the use of, or with only limited additional human labels, may be challenging. Additionally, confirming whether the model leverages the properties of anisotropy-adjusted self-similarity during actual inference tasks might also be challenging. 

There appears to be a trade-off here between two types of metrics:

1. Metrics that are constrained by supervised signals with ground truth labels, which provide more practical intuition. 

2. Metrics that reflect the geometric properties of the vector set itself, which provide a more formal representation.

Therefore, we propose a new line that takes traditional supervised probes as the structure of the white-box model and then self-supervises it, trying to preserve both of the abovementioned properties as much as possible. The motivation behind this idea is that any feature that is beneficial for interpretability has internal constraints. If a certain feature has no internal constraints, it must be represented by a vector set without geometric constraints, which does not contain any interpretable factors. Therefore, what is important for interpretability is the correspondence between the internal constraints of the probed features and the vector set, which can describe the geometric structure of the vector set to some extent. \textbf{In the case where the internal constraints of the probed features are well defined, a probe that detects these features can naturally induce a probe that detects the internal constraints, which is self-supervised}.

In summary, the contributions of this work include:
\begin{enumerate}
  \item We propose a novel self-supervised probe, referred to as the \textbf{tree topological probe}, to probe the hierarchical structure of sentence representations learned by pretrained language models like BERT.
  \item We discuss the theoretical relationship between the tree topological probe and the structural probe, with the former bounding the latter.
  \item We measure the metrics constructed based on the tree topological probe on BERT-large. According to the experimental results, we propose a speculation regarding the working mechanism of a BERT-like pretrained language model.
  \item We utilize metrics constructed by the tree topological probe to enhance BERT's submodules during fine-tuning and observe that enhancing certain modules can improve the fine-tuning performance. We also propose a strategy for selecting submodules.
\end{enumerate}

\section{Related Work}

The probe is the most common approach for associating neural network representations with linguistic properties \citep{DBLP:conf/emnlp/VoitaT20}. This approach is widely used to explore part of speech knowledge \citep{DBLP:journals/tacl/BelinkovG19,DBLP:conf/emnlp/VoitaT20,DBLP:conf/acl/PimentelVMZWC20,DBLP:conf/emnlp/HewittELM21} and for sentence and dependency structures \citep{DBLP:conf/naacl/HewittM19,DBLP:conf/acl/MaudslayVPWC20,DBLP:conf/naacl/WhitePSC21,DBLP:conf/acl/LimisiewiczM20,DBLP:conf/iclr/ChenFXXTCJ21}. These studies demonstrate many important aspects of the linguistic information are encoded in pretrained representations. However, in some probe experiments, researchers have found that the probe precision obtained by both random representation and pretrained representation were quite close \citep{DBLP:conf/emnlp/ZhangB18,DBLP:conf/emnlp/HewittL19}. This demonstrates that it is not sufficient to use the probe precision to measure whether the representations contain specific language information. To improve the reliability of probes, some researchers have proposed the use of control tasks in probe experiments \citep{DBLP:conf/emnlp/HewittL19}. In recent research, \citeauthor{DBLP:conf/iclr/LoveringJLP21} realized that inductive bias can be used to describe the ease of extracting relevant features from representations. \citeauthor{DBLP:conf/acl/ImmerHFC22} further proposed a Bayesian framework for quantifying inductive bias with probes, and they used the Model Evidence Maximum instead of trivial precision.

\section{Methodology}

As the foundation of the white-box model proposed in this paper is built upon the traditional probe, we will begin by providing a general description of the probe based on the definition presented in \cite{DBLP:journals/corr/abs-2104-08197}. Additionally, we will introduce some relevant notation for better understanding.

\subsection{General Form of the Probe}

Given a character set, in a formal language, the generation rules uniquely determine the properties of the language. We assume that there also exists a set of generation rules $\mathcal{R}$ implicitly in natural language, and the language objects derived from these rules exhibit a series of features. Among these features, a subset $Y$ is selected as the probed feature for which the properties represent the logical constraints of the generation rule set. Assuming there is another model $\mathcal{M}$ that can assign a suitable representation vector to the generated language objects, the properties of $Y$ are then represented by the intrinsic geometric constraints of the vector set. By studying the geometric constraints that are implicit in the vector set and that correspond to $Y$, especially when $Y$ is expanded to all features of the language object, we can determine the correspondence between $\mathcal{M}$ and $\mathcal{R}$. The probe is a model that investigates the relationship between the geometric constraints of the vector set and $Y$. It is composed of a function set $F$ and a metric $E_{Y}$ defined on $Y$. The input of a function in $F$ is the representation vector of a language object, and the output is the predicted $Y$ feature of the input language object. The distance between the predicted feature and the true feature is calculated by using the metric $E_{Y}$, and a function $f$ in $F$ that minimizes the distance is determined. Here, $F$ limits the range of geometric constraints, and $E_{Y}$ limits the selection of a "good" geometry. Notably, this definition seems very similar to that of learning. Therefore, the larger the scope of $F$ is, the harder it is to discern the form of the geometric constraints, especially when $F$ is a neural network \citep{DBLP:conf/acl/PimentelVMZWC20, DBLP:conf/naacl/WhitePSC21}. However, the purpose of the probe is different from that of learning. The goal of learning is to construct a model $\mathcal{M}$ (usually a black box), which may have multiple construction methods, while the purpose of the probe is to analyze the relationship between $\mathcal{M}$ and $\mathcal{R}$.

\subsection{The Design Scheme for the Topological Probe}
One of the goals of topology is to find homeomorphic or homotopic invariants (including invariant quantities, algebraic structures, functors, etc.) and then to characterize the intrinsic structure of a topological space with these invariants. Analogously, we can view $R$ as a geometric object and $Y$ as its topology. Can we then define a concept similar to topological invariants with respect to $Y$? 

We define a feature invariant for $Y$ as a set of conditions $C_{Y}$ such that any element in $Y$ satisfies $C_{Y}$. $C_{Y}$ reflects the internal constraints of the probed feature, as well as a part of the logical constraints of $R$. Furthermore, if $C_{Y}$ is well defined, it induces a set $X_{C_{Y}}$ consisting of all objects satisfying $C_{Y}$, which naturally extends the metric defined on $Y$ to $X_{C_{Y}}$. 

Furthermore, just as the distance measure between two points can induce a distance measure between a point and a plane, the distance measure between the predicted feature $px$ and $X_{C_{Y}}$ can also be induced by $E_{Y}$ (denoted as $E_{C_{Y}}$):
\begin{equation}\label{eq1}
E_{C_{Y}}(px, X_{C_{Y}}) = \mathop{\min}\limits_{x\in X_{C_{Y}}}E_{Y}(px, x)
\end{equation}

It can be easily verified that if $E_{Y}$ is a well-defined distance metric on $Y$, then $E_{C_{Y}}$ should also be a well-defined distance metric on $px$. Once we have $E_{C_{Y}}$, the supervised probe $(F, E_{Y}, Y)$ can naturally induce a self-supervised probe $(F, E_{C_{Y}}, C_{Y})$. We refer to $(F, E_{C_{Y}}, C_{Y})$ as the self-supervised version of $(F, E_{Y}, Y)$, also known as the topological probe. 

Notably, the prerequisite for obtaining $(F, E_{C_{Y}}, C_{Y})$ is that $C_{Y}$ must be well-defined, so $C_{Y}$ should not be a black box. Figure \ref{fig: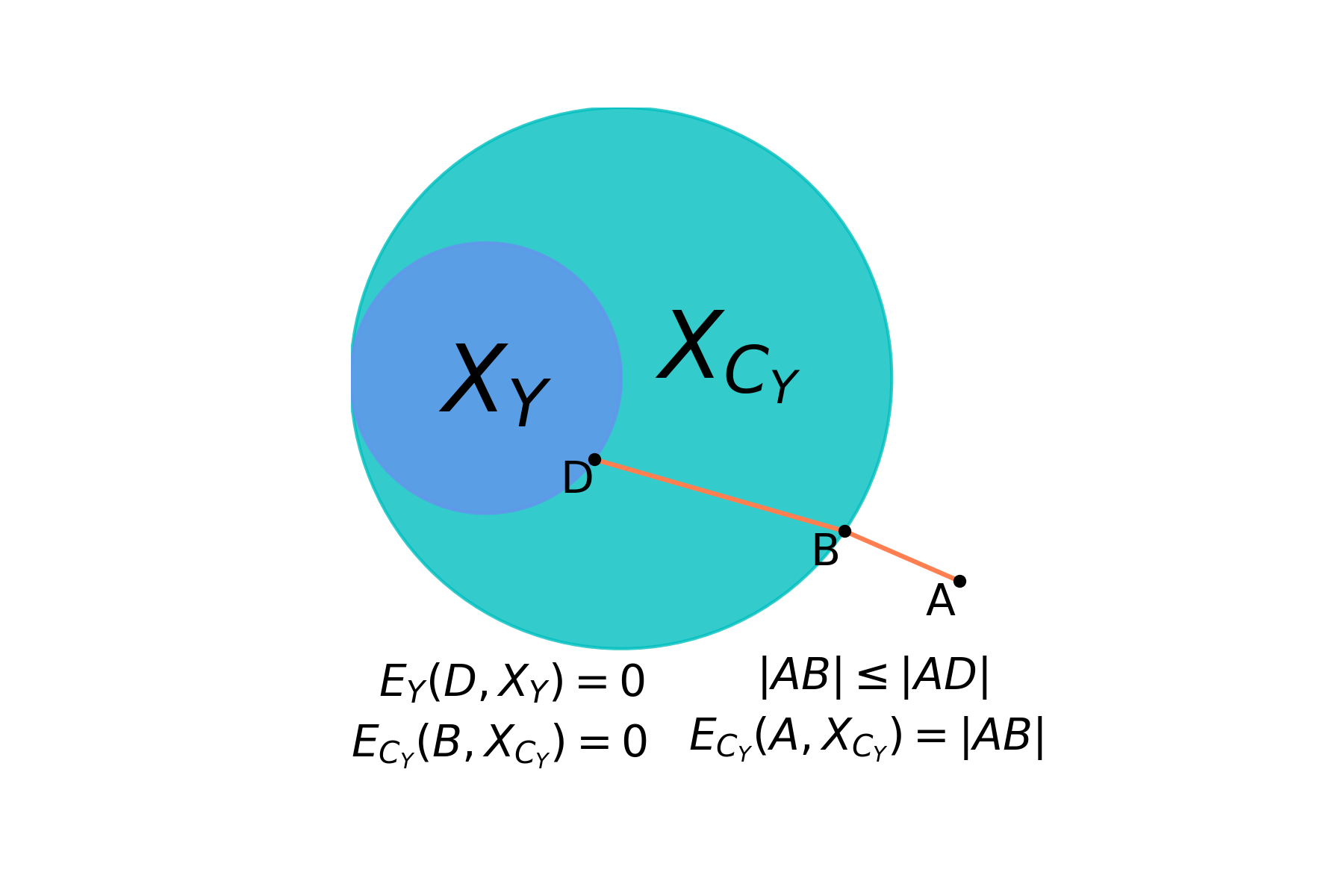} shows an intuitive illustration.

\begin{figure}[htbp]
  \centering
  \includegraphics[scale=0.5]{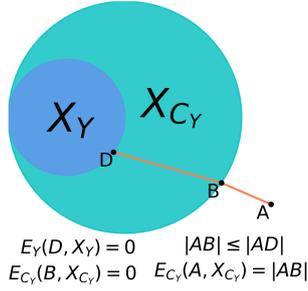}
  \caption{The relationship between the distance from the predicted feature A to $X_{C_{Y}}$ and the distance from A to $X_{Y}$.}
  \label{fig:relation.png}
\end{figure}

Next, we present a specific topological probe that is based on the previously outlined design scheme and serves as a self-supervised variant of the structural probe.

\subsection{The Self-supervised Tree Topological Probe}
Given a sentence $W$, it is represented by a model $M$ as a set (or sequence) of vectors, denoted as $H=M(W)$. The number of vectors in $H$ is denoted as $L_{H}$, and we assign an index $(1,2,...L_{H})$ to each vector in $H$ so that the order of the indices matches the order of the corresponding tokens in the sentence. Additionally, we denote the dimension of the vectors as $n$. For each $W$, there exists a syntax tree $T_{W}$, where each complete word in $W$ corresponds to a node in $T_{W}$. 

The probed feature $Y$ that the structural probe defines is the depth of the nodes corresponding to complete words. Following the work in \citep{DBLP:conf/naacl/HewittM19}, we set the parameter space of $F$ for the structural probe to be all real matrices of size $m*n$, where $m<n$. The specific form for predicting the depth is as follows:
\quad \qquad \; Given $p \in R,\, \forall 1\leq i\leq L_{H}$
\begin{equation}\label{eq2}
\bm{p}dep(h_{i}) = \|f*h_{i}\|^{p}
\end{equation}
where $\bm{p}dep(h_{i})$ is the prediction tree depth of $w_{i}$ in $T_{W}$ and $f$ is a real matrix of size $m*n$. Because $\forall p<2$, there is a tree that cannot be embedded as above \citep{DBLP:conf/nips/ReifYWVCPK19}, so $p$ is usually taken as $2$. $\bm{p}dep(h_{1})$, $\bm{p}dep(h_{2})$ $\cdots $, $\bm{p}dep(h_{L_{H}})$ form a sequence denoted as $\bm{p}dep_{H}$. 

Moreover, we denote the true depth of $w_{i}$ as $dep(w_{i})$. Hence, $dep(w_{1})$, $dep(w_{2})$ $\cdots $, $dep(w_{L_{H}})$ also form a sequence denoted as $dep_{W}$. The metric $E$ in the structural probe is defined as follows:
$$
E(\bm{p}dep_{H}, dep_{W})  \qquad \qquad \qquad
$$
\begin{equation}\label{eq3}
=\frac{1}{L_{H}} \sum_{i=1}^{L_{H}}(\bm{p}dep(h_{i})-dep(w_{i}))^{2}
\end{equation}
Therefore, the structural probe is defined as $(|f*|^{2}$, $E$ , $dep)$.

Now we provide the constraints $C_{dep}$ for $dep$. An important limitation of $dep_{W}$ is that it is an integer sequence. Based on the characteristics of the tree structure, it is naturally determined that $dep_{W}$ must satisfy the following two conditions:

\textbf{(Boundary condition).} If $L_{H}\geq 1$, there is exactly one minimum element in $dep_{W}$, and it is equal to $1$; if $L_{H}\geq 2$, at least one element in $dep_{W}$ is equal to $2$.

\textbf{(Recursion condition).} If we sort $dep_{W}$ in ascending order to obtain the sequence $asdep_{W}$, then
$$
\forall 1\leq i \leq L_{H}-1
$$
$$
asdep(w_{i+1})=asdep(w_{i})
$$
or
$$
asdep(w_{i+1})=asdep(w_{i})+1
$$
We denote the set of all sequences that conform to $C_{dep}$ as $X_{C_{dep}}$. From equation $\ref{eq1}$, we can induce a metric $E_{C_{dep}}$:
\begin{equation}\label{eq4}
E_{C_{dep}}(\bm{p}dep_{H}, X_{C_{dep}}) = \mathop{\min}\limits_{x\in X_{C_{dep}}}E(\bm{p}dep_{H}, x)
\end{equation}
Assuming we can construct an explicit sequence $mins_{W}$ such that:
\begin{equation}\label{eq5}
mins_{W} = \mathop{\arg\min}\limits_{x\in X_{C_{dep}}}\sum_{i=1}^{L_{H}}(\bm{p}dep(h_{i})- x(w_{i}))^{2}
\end{equation}
We can obtain an analytical expression for $E_{C_{dep}}$ as follows:
\begin{equation}\label{eq6}
E_{C_{dep}}(\bm{p}dep_{H}, X_{C_{dep}}) = E(\bm{p}dep_{H}, mins_{W})
\end{equation}
Consider  the following two examples:
\begin{enumerate}
  \item When $\bm{p}dep_{H} = 0.8, 1.5, 1.8, 2.4, 4.5$, then $mins_{W} = 1, 2, 2, 3, 4$.
  \item When $\bm{p}dep_{H} = 0.8, 1.5, 1.8, 2.4, 7.5$, then $mins_{W} = 1, 2, 3, 4, 5$.
\end{enumerate}

It can be observed that the predicted depths for nodes further down the hierarchy can also influence the corresponding values of $mins_{W}$ for nodes higher up in the hierarchy. In the examples provided, due to the change from 4.5 to 7.5, 1.8 changes from 2 to 3 at the corresponding $mins_{W}$. Therefore, using a straightforward local greedy approach may not yield an accurate calculation of $mins_{W}$, and if a simple enumeration method is employed, the computational complexity will become exponential.

However, while a local greedy approach may not always provide an exact computation of $mins_{W}$, it can still maintain a certain degree of accuracy for reasonable results of $\bm{p}dep_{H}$. This is because cases like the jump from 2.4 to 7.5 should be infrequent in a well-trained probe's computed sequence of predicted depths, unless the probed representation does not encode the tree structure well and exhibits a disruption in the middle.

Before delving into that, we first introduce some notations:
\begin{itemize}
  \item $\bm{ap}dep_{H}$ denote the sequence obtained by sorting $\bm{p}dep_{H}$ in ascending order.
  \item $\bm{ap}dep_{i}$ represents the $i$-th element of $\bm{ap}dep_{H}$.
  \item $pre_{W}$ be a sequence in $X_{C_{dep}}$.
\end{itemize}

Here, we introduce a simple method for constructing $mins_{W}$ from a local greedy perspective.

\textbf{(Initialization).} If $L_{H}\geq 1$, let $pre(w_{1})=1$; if $L_{H}\geq 2$, let $pre(w_{2})=2$.

\textbf{(Recurrence).} If $L_{H}\geq 3$ and $3\leq i\leq L_{H}$, let
\begin{equation}\label{eq7}
pre(w_{i})=pre(w_{i-1})+bias_{i-1}
\end{equation}
where the values of $bias_{i-1}$ and $\bm{ap}dep_{H}$ are related if
$$
|pre(w_{i-1})+1-\bm{ap}dep_{i}| \leq |pre(w_{i-1})-\bm{ap}dep_{i}|
$$
$bias_{i-1}=1$; otherwise, $bias_{i-1}=0$.

\textbf{(Alignment).} Let $a_i$($1\leq i \leq L_{H}$) denote the index of $\bm{ap}dep_i$ in $\bm{p}dep_H$. Then, let
\begin{equation}\label{eq8}
pesu(w_{a_{i}})=pre(w_{i})
\end{equation}
It can be shown that $pesu_{W}$ constructed in the above manner satisfies the following theorem:
\newtheorem{thm}{Theorem}
\label{thm1}
\begin{thm}
If $\, \forall i=1$, $2$ $\cdots$, $L_{H}-1$, $\bm{ap}dep_{i+1}-\bm{ap}dep_{i}<=1$, then
$$
E(\bm{p}dep_{H}, pesu_{W}) = E(\bm{p}dep_{H}, mins_{W})
$$
\end{thm}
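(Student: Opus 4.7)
The plan is a two-step reduction followed by a local exchange argument. First, the rearrangement inequality reduces the theorem to a one-dimensional claim: that the greedy sequence $pre$ minimizes $\sum_i(\bm{ap}dep_i-v_i)^2$ over all integer sequences $v$ with $v_1=1$, $v_2=2$, and $v_{i+1}-v_i\in\{0,1\}$. Then I would induct on the number of indices at which this greedy sequence disagrees with an optimal sequence, reducing disagreements one at a time via carefully chosen block modifications.

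For the reduction, note that $X_{C_{dep}}$ is defined by conditions on $asdep_W$ and is therefore closed under permutation of coordinates. For any fixed value-multiset, the rearrangement inequality implies that $\sum_i(\bm{p}dep(h_i)-x(w_i))^2$ is minimized when the entries of $x$ are arranged in the same order as $\bm{p}dep_H$. I can therefore pick a minimizer $mins_W$ whose values, read in the sorted order of $\bm{p}dep$, form a non-decreasing integer sequence satisfying the unit-step condition; by construction $pre(w_1),\ldots,pre(w_{L_H})$ is such a sequence as well. Writing $G_i=pre(w_i)$ and $OPT_i$ for the analogous reordering of $mins_W$, the theorem reduces to the equality $\sum_i(\bm{ap}dep_i-G_i)^2=\sum_i(\bm{ap}dep_i-OPT_i)^2$.

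For the exchange step, suppose $G\neq OPT$ and let $i_0\geq3$ be the first index at which they differ (the boundary condition fixes indices $1$ and $2$). Since both sequences move by $0$ or $1$ per step and agree up to $i_0-1$, one of two cases occurs: (a) $G_{i_0}=G_{i_0-1}+1$ and $OPT_{i_0}=G_{i_0-1}$, or (b) $G_{i_0}=G_{i_0-1}$ and $OPT_{i_0}=G_{i_0-1}+1$. The greedy rule gives $\bm{ap}dep_{i_0}\geq G_{i_0-1}+1/2$ in case (a) and $\bm{ap}dep_{i_0}<G_{i_0-1}+1/2$ in case (b). In case (a), let $i^{\star}$ be the largest index on which $OPT$ is constant starting from $i_0$, and form $OPT'$ by adding $1$ to each $OPT_j$ for $j\in[i_0,i^{\star}]$; the added unit step at $i_0$ is balanced by the lost unit step at $i^{\star}+1$ (if $i^{\star}<L_H$), so $OPT'\in X_{C_{dep}}$. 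A direct expansion gives cost change $\Delta=\sum_{j=i_0}^{i^{\star}}[1-2(\bm{ap}dep_j-OPT_j)]$; using $OPT_j=G_{i_0-1}$ throughout the block together with $\bm{ap}dep_j\geq\bm{ap}dep_{i_0}\geq G_{i_0-1}+1/2$ by monotonicity of the sorted sequence, every summand is $\leq0$, so $\Delta\leq0$; optimality of $OPT$ forces $\Delta\geq0$, hence $\Delta=0$. Thus $OPT'$ is still optimal and now agrees with $G$ on $[1,i_0]$.

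In case (b), let $i^{\star}$ be the largest index with $OPT_{i_0+k}=OPT_{i_0}+k$ for $0\leq k\leq i^{\star}-i_0$ and form $OPT'$ by subtracting $1$ on $[i_0,i^{\star}]$; the analogous bookkeeping shows $OPT'\in X_{C_{dep}}$. Here the hypothesis is essential: summing $\bm{ap}dep_{k+1}-\bm{ap}dep_k\leq1$ gives $\bm{ap}dep_j\leq\bm{ap}dep_{i_0}+(j-i_0)<OPT_j-1/2$ for each $j$ in the block, so every summand of $\Delta=\sum_{j=i_0}^{i^{\star}}[2(\bm{ap}dep_j-OPT_j)+1]$ is strictly negative, contradicting the optimality of $OPT$. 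Hence case (b) never arises, and case (a) preserves the optimum while strictly advancing the first-disagreement index; iterating at most $L_H$ times produces a minimizer equal to $G$, which proves the theorem. The main obstacle will be choosing the correct maximal block on which to shift in each case so that $OPT'$ remains in $X_{C_{dep}}$; this is also the only place the slope-one hypothesis on $\bm{ap}dep$ is actually used (to rule out case (b)).
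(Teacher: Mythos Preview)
Your proposal is correct and shares the paper's opening move---the rearrangement inequality reduces the minimization over $X_{C_{dep}}$ to sequences sorted compatibly with $\bm{p}dep_H$---but thereafter the two arguments diverge. The paper argues a \emph{pointwise} dominance: at the first index $k$ where $pesu$ and a competitor $x$ disagree, the greedy rule gives $|pesu(w_k)-\bm{ap}dep_k|\le|x(w_k)-\bm{ap}dep_k|$, and then an induction (invoking $\bm{ap}dep_{m+1}-\bm{ap}dep_m\le1$ together with the definition of $bias_m$) propagates this termwise inequality to all $i\ge k$; summing yields the result directly. You instead run an \emph{exchange argument}: you take an optimal $OPT$, locate the first disagreement with the greedy sequence $G$, and shift $OPT$ by $\pm1$ on a maximal block so as to advance the agreement prefix without raising the cost, iterating until $OPT=G$. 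Your route is a bit longer but makes the role of the hypothesis completely explicit---it is used exactly once, in case~(b), to force $\bm{ap}dep_j-OPT_j<-\tfrac12$ along the increasing block and thereby rule that case out. The paper's induction is terser and yields the stronger conclusion of termwise dominance rather than mere equality of sums, but its inductive step is only sketched and one must fill in the case analysis to see why the termwise bound persists.
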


Therefore, $pesu_{W}$ can be considered an approximation to $mins_{W}$.  Appendix \ref{sec:appendixA} contains the proof of this theorem. In the subsequent sections of this paper, we replace $E_{C_{dep}}(\bm{p}dep_{H}, X_{C_{dep}})$ with $E(\bm{p}dep_{H}, pesu_{W})$.

Additionally, an important consideration is determining the appropriate value of the minimum element for $dep_{W}$ in the boundary condition. In the preceding contents, we assumed a root depth of 1 for the syntactic tree. However, in traditional structural probe \citep{DBLP:conf/naacl/HewittM19, DBLP:conf/acl/MaudslayVPWC20,DBLP:conf/acl/LimisiewiczM20,DBLP:conf/iclr/ChenFXXTCJ21,DBLP:conf/naacl/WhitePSC21}, the root depth is typically assigned as 0 due to the annotation conventions of syntactic tree datasets. From a logical perspective, these two choices may appear indistinguishable.

However, in Appendix \ref{sec:appendixB}, we demonstrate that the choice of whether the root depth is 0 has a significant impact on the geometry defined by the tree topological probe. Furthermore, we can prove that as long as the assigned root depth is greater than 0, the optimal geometry defined by the tree topological probe remains the same to a certain extent. Therefore, in the subsequent sections of this paper, we adopt the setting where the value of the minimum element of $dep_{W}$ is $1$.

\subsection{Enhancements to the Tree Topological Probe}
Let the set of all language objects generated by rule $R$ be denoted as $\mathcal{X}_R$, and the cardinality of $\mathcal{X}_R$ be denoted as $|\mathcal{X}_R|$. The structural probe induces a metric that describes the relationship between model $M$ and $dep$:
\begin{equation}\label{eq9}
\mathcal{X}_{sp}(M) = \mathop{\min}\limits_{f\in F}\frac{1}{|\mathcal{X}_R|}\sum_{W\in \mathcal{X}_{R}}E(\bm{p}dep_{M(W)}, dep_{W})
\end{equation}
The tree topological probe can also induce a similar metric:

$\mathcal{X}_{ssp}(M) =$
\begin{equation}\label{eq10}
\mathop{\min}\limits_{f\in F}\frac{1}{|\mathcal{X}_R|}\sum_{W\in \mathcal{X}_{R}}E(\bm{p}dep_{M(W)}, mins_{W})
\end{equation}
On the other hand, we let
\begin{equation}\label{eq11}
maxs_{W} = \mathop{\arg\max}\limits_{x\in X_{C_{dep}}}\sum_{i=1}^{L_{H}}(\bm{p}dep(h_{i})- x(w_{i}))^{2}
\end{equation}
similar to $mins_{W}$, and $maxs_{W}$, inducing the following metrics:

$\mathcal{X}_{essp}(M)$
\begin{equation}\label{eq12}
= \mathop{\min}\limits_{f\in F}\frac{1}{|\mathcal{X}_R|}\sum_{W\in \mathcal{X}_{R}}E(\bm{p}dep_{M(W)}, maxs_{W})
\end{equation}
Since $dep_{W} \in X_{C_{dep}}$, when $f$ is given, we have:
$$
E(\bm{p}dep_{M(W)}, dep_{W}) \leq \mathop{\max}\limits_{x\in X_{C_{dep}}}E(\bm{p}dep_{M(W)}, x)
$$
Furthermore, as $\mathcal{X}{sp}(M)$ and $\mathcal{X}{essp}(M)$ share the same set of probing functions $F$, we have:
$$
\mathcal{X}_{sp}(M) \leq \mathcal{X}_{essp}(M)
$$
Therefore, $\mathcal{X}_{essp}(M)$ provides us with an upper bound for the structural probe metric. Similarly, for $\mathcal{X}{ssp}(M)$, we also have:
$$
\mathcal{X}_{ssp}(M) \leq \mathcal{X}_{sp}(M)
$$
Therefore, $\mathcal{X}_{ssp}(M)$ provides us with a lower bound for the structural probe metric.
In summary, we have the following:
$$
\mathcal{X}_{ssp}(M) \leq \mathcal{X}_{sp}(M) \leq \mathcal{X}_{essp}(M)
$$
If $\mathcal{X}{ssp}(M) = \mathcal{X}{essp}(M)$, then there is no difference between the tree topological probe and the structural probe. On the other hand, if it is believed that a smaller $\mathcal{X}{sp}(M)$ is desirable, then estimating $\mathcal{X}{sp}(M)$ within the range $[\mathcal{X}{ssp}(M), \mathcal{X}{essp}(M)]$ becomes an interesting problem. We consider the following:

$\theta_{W}=$
\begin{equation}\label{eq13}
\frac{E(\bm{p}dep_{M(W)}, dep_{W})-E(\bm{p}dep_{M(W)}, mins_{W})}{E(\bm{p}dep_{M(W)}, maxs_{W})-E(\bm{p}dep_{M(W)}, mins_{W})}
\end{equation}
This leads to an intriguing linguistic distribution, the distribution of $\theta_{W}\in [0, 1]$ when uniformly sampling $W$ from $\mathcal{X}_{R}$. We suppose the density function of this distribution is denoted as $P_{\theta}$, and the expectation with respect to $\theta$ is denoted as $E_{P_{\theta}}$. Then we can approximate $\mathcal{X}{sp}(M)$ as follows:
\begin{equation}\label{eq14}
\mathcal{X}{sp}(M) = E_{P_{\theta}}\mathcal{X}{essp}(M)+(1-E_{P_{\theta}})\mathcal{X}{ssp}(M)
\end{equation}
While the analysis of $P_{\theta}$ is not the primary focus of this paper, in the absence of any other constraints or biases on model $M$, we conjecture that the distribution curve of $\theta$ may resemble a uniform bell curve. Hence, we consider the following distribution approximation:
$$
P_{\theta}(x) = 6(x-x^2) \quad x\in [0, 1]
$$
At this point:
\begin{equation}\label{eq15}
\mathcal{X}{sp}(M) = \frac{1}{2}(\mathcal{X}{essp}(M)+\mathcal{X}{ssp}(M))
\end{equation}

Therefore, utilizing a self-supervised metric can approximate the unbiased optimal geometry defined by the structural probe:
\begin{equation}\label{eq16}
M_{G} = \mathop{\arg\min}\limits_{M} \frac{1}{2}(\mathcal{X}{essp}(M)+\mathcal{X}{ssp}(M))
\end{equation}

Moreover, $M_{G}$ is an analytically tractable object, implying that the metrics induced by the tree topological probe preserve to a certain extent the two metric properties discussed in the introduction. However, there is a crucial issue that remains unresolved. Can we explicitly construct $maxs_{W}$? Currently, we have not found a straightforward method similar to constructing $pesu_{W}$ for approximating $maxs_{W}$. However, based on the sorting inequality, we can construct a sequence that approximates $maxs_{W}$ based on $pre_{W}$. Let $d_{i}$($1\leq i \leq L_{H}$) denote $L_{H}-i+1$. Then, let
\begin{equation}\label{eq17}
xpesu(w_{a_{i}})=pre(w_{d_{i}})
\end{equation}
In our subsequent experiments, we approximate $E(\bm{p}dep_{H}, maxs_{W})$ with $E(\bm{p}dep_{H}, xpesu_{W})$.

\section{Experiments}
In this section, delve into a range of experiments conducted on the tree topological probe, along with the underlying motivations behind them. To accommodate space limitations, we include many specific details of the experimental settings in Appendices \ref{sec:appendixC} and \ref{sec:appendixD}. Moreover, we focus our experiments on BERT-large and its submodules. Moreover, conducting similar experiments on other models is also straightforward (refer to Appendix \ref{sec:appendixF} for supplementary results of experiments conducted using RoBERTa-large).

\subsection{Measuring $\mathcal{X}{ssp}$ and $\mathcal{X}{essp}$ on BERT}
We denote the model consisting of the input layer and the first $i$ transformer blocks of BERT-large as $M_{i}(0\leq i \leq 24)$. Since the input of $M_{i}$ consists of tokenized units, including special tokens [CLS], [SEP], [PAD], and [MASK], we can conduct at least four types of measurement experiments:

\begin{enumerate}[label=e\arabic*.]
  \item Measurement of the vector set formed by token embedding and special token embedding.
  \item Measurement of the vector set formed solely by token embedding.
  \item Measurement of the vector set formed by estimated embedding of complete words using token embedding and special token embedding.
  \item Measurement of the vector set formed solely by estimated embedding of complete words using token embedding.
\end{enumerate}

Similarly, due to space constraints, we focus on discussing e1 in this paper. The measurement results are shown in Tables \ref{tab:1} and \ref{tab:2}. The precise measurement values can be found in Appendix \ref{sec:appendixE}. Furthermore, as shown in Figure \ref{fig: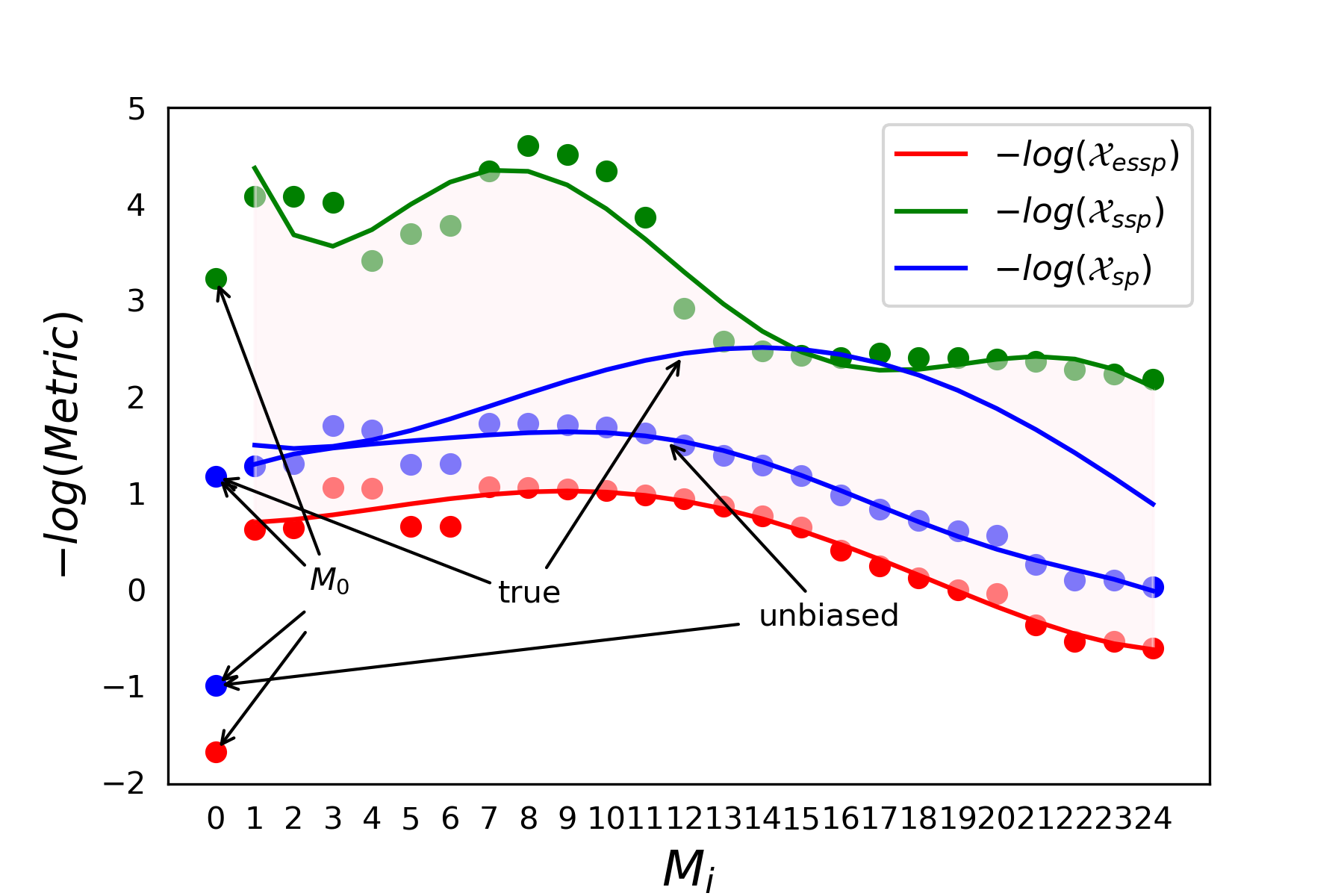}, we present the negative logarithm curves of three measurement values as a function of $M_{i}$ variation.

\begin{table}[htbp]
  \centering
  \begin{tabular}{cc}
  \hline
  \textbf{$\mathcal{X}{ssp}$} & \textbf{$M$} \\
  \hline
  0.01$\sim$0.05 & $M_{0}$$\sim$$M_{11}$ \\
  0.05$\sim$0.1 & $M_{12}$$\sim$$M_{21}$ \\
  0.1$\sim$0.15 & $M_{22}$$\sim$$M_{23}$ \\
  \hline
  \end{tabular}
\caption{\label{tab:1} Grouping $M_{i}$ based on $\mathcal{X}{ssp}$. $M_{l}$$\sim$$M_{r}$ denotes $M_{l}, M_{l+1}, M_{l+2}, \ldots, M_{r}$. For example, the first row of the table indicates that the exact values of $\mathcal{X}{ssp}$ for $M_{0}, M_{1}, M_{2}, \ldots, M_{11}$ fall within the range of 0.01 to 0.05.}
\end{table}

\begin{table}[htbp]
  \centering
  \begin{tabular}{cccc}
  \hline
  \textbf{$\mathcal{X}{essp}$} & \multicolumn{3}{c}{\textbf{$M$}} \\
  \hline
  0.3$\sim$0.4 & $M_{3}$$\sim$$M_{4}$ & $M_{7}$$\sim$$M_{12}$ & \phantom{} \\
  0.4$\sim$0.5 & \phantom{} & $M_{13}$$\sim$$M_{14}$ & \phantom{} \\
  0.5$\sim$1.0 & $M_{1}$$\sim$$M_{2}$ & $M_{5}$$\sim$$M_{6}$ & $M_{15}$$\sim$$M_{19}$ \\
  1.0$\sim$2.0 & \phantom{} & \phantom{} & $M_{20}$$\sim$$M_{24}$ \\
  $\geq 4.0$ & $M_{0}$ & \phantom{} & \phantom{}\\
  \hline
  \end{tabular}
\caption{\label{tab:2} Grouping $M_{i}$ based on $\mathcal{X}{essp}$. Similar to the explanation in the caption of Table \ref{tab:1}.}
\end{table}

\begin{figure}[htbp]
  \centering
  \includegraphics[scale=0.55]{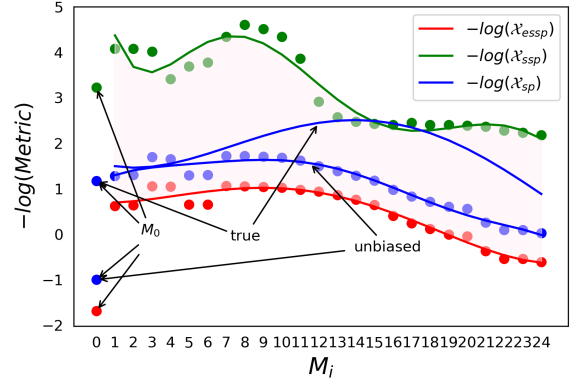}
  \caption{Negative logarithm of $\mathcal{X}_{ssp}$, $\mathcal{X}_{essp}$, unbiased $\mathcal{X}_{sp}$ and true $\mathcal{X}_{sp}$ across $M_{i}$.}
  \label{fig:log_metric.png}
\end{figure}

By examining the experimental results presented above, we can ascertain the following findings:

\begin{enumerate}[label=f\arabic*.]
  \item $\mathcal{X}{ssp}$ and $\mathcal{X}{essp}$ indeed bound the actual $\mathcal{X}{sp}$, and for $M_{14}$ to $M_{18}$, their true $\mathcal{X}{sp}$ are very close to their $\mathcal{X}{ssp}$.
  \item $M_{0}$ serves as a good baseline model. Furthermore, using $\mathcal{X}{essp}$ and unbiased $\mathcal{X}{sp}$ allows for effective differentiation between embeddings generated by models consisting solely of the regular input layer and those generated by models incorporating transformer blocks.
  \item For $M_{1}$ to $M_{6}$, their true $\mathcal{X}{sp}$ are very close to their unbiased $\mathcal{X}{sp}$.
  \item Both the curve of $-log(\mathcal{X}{essp})$ and the curve of the true $-log(\mathcal{X}{sp})$ follow an ascending-then-descending pattern. However, the models corresponding to their highest points are different, namely, $M_{8}$ and $M_{16}$, respectively.
  \item For the curve of $-log(\mathcal{X}{ssp})$, its overall trend also shows an ascending-then-descending pattern but with some fluctuations in the range of $M_{3}$ to $M_{6}$. However, the model corresponding to its highest point is consistent with $-log(\mathcal{X}{essp})$, which is $M_{8}$.
  \item The true $\mathcal{X}{sp}$ does not effectively distinguish between $M_{0}$ and $M_{1}$.
\end{enumerate}

Based on the above findings, we can confidently draw the following rigorous conclusions:
\begin{enumerate}[label=c\arabic*.]
  \item Based on f1, we can almost infer that $dep_{W} \in \mathop{\arg\min}\limits_{x\in X_{C_{dep}}}\sum_{i=1}^{L_{H}}(\bm{p}dep(h_{i})- x(w_{i}))^{2}$ for $M_{14}$ to $M_{18}$. \textbf{This implies that they memorize the preferences of the real data and minimize as much as possible to approach the theoretical boundary}. Building upon f5, we can further conclude that the cost of memorizing $dep_{W}$ is an increase in $\mathcal{X}{ssp}$, which leads to a decrease in the accuracy of the embedding's linear encoding for tree structures.
  \item Based on f1, we can conclude that there exists a model $M$ where the true $\mathcal{X}{sp}(M)$ aligns with the $\mathcal{X}{ssp}(M)$ determined by $C_{dep}$. \textbf{This indicates that $C_{dep}$ serves as a sufficiently tight condition}.
  \item Based on f3, we can infer that $M_{1}$ to $M_{6}$ may not capture the distributional information of the actual syntactic trees, resulting in their generated embeddings considering only the most general case for linear encoding of tree structures. This implies that the distribution curve of their $\theta_{W}$ parameters is uniformly bell-shaped.
  \item Based on f2 and f6, we can conclude that the tree topological probe provides a more fine-grained evaluation of the ability to linearly encode tree structures in embedding vectors compared to the structural probe.
  \item Based on f3, f4 and f5, we can conclude that in BERT-large, embedding generated by $M_{8}$ and its neighboring models exhibit the strongest ability to linearly encode tree structures. Moreover, they gradually start to consider the distribution of real dependency trees, resulting in the true $\mathcal{X}{sp}(M)$ approaching $\mathcal{X}{ssp}(M)$ until reaching $M_{16}$.
  \item Based on f4 and f5, we can conclude that starting from $M_{16}$, the embeddings generated by $M_{i}$ gradually lose their ability to linearly encode tree structures. The values of $\mathcal{X}{ssp}$ and $\mathcal{X}{essp}$ for these models are generally larger compared to models before $M_{16}$. However, they still retain some distributional information about the depth of dependency trees. This means that despite having a higher unbiased $\mathcal{X}{sp}$, their true $\mathcal{X}{sp}$ is still smaller than that of $M_{i}$ before $M_{8}$.
\end{enumerate}

From the above conclusions, we can further speculate about the workings of pretrained language models such as BERT, and we identify some related open problems.

Based on c5 and c6, we can speculate that the final layer of a pretrained language model needs to consider language information at various levels, but its memory capacity is limited. Therefore, it relies on preceding submodules to filter the information. The earlier submodules in the model encode the most generic (unbiased) structures present in the language features. As the model advances, the intermediate submodules start incorporating preferences for general structures based on actual data. Once a certain stage is reached, the later submodules in the model start to loosen their encoding of generic structures. However, due to the preference information passed from the intermediate submodules, the later submodules can still outperform the earlier submodules in encoding real structures, rather than generic ones.

Based on c3 and c6, it appears that true $\mathcal{X}{sp} \leq$ unbiased $\mathcal{X}{sp} <$ $\mathcal{X}{essp}$. This suggests that for BERT, unbiased $\mathcal{X}{sp}$ serves as a tighter upper bound for $\mathcal{X}{sp}$, and there exists a submodule that achieves this upper bound. Now, the question arises: Is this also the case for general pretrained models? If so, what are the underlying reasons?

\subsection{Using $\mathcal{X}{ssp}$ and $\mathcal{X}{essp}$ as Regularization Loss in Fine-tuning BERT}

Let us denote the downstream task loss as $T(M_{24})$. Taking $\mathcal{X}{ssp}$ as an example, using $\mathcal{X}{ssp}$ as a regularizing loss during fine-tuning refers to replacing the task loss with:
$$
T(M_{24})+\lambda * \mathcal{X}{ssp}(M_{i}) \quad (1 \leq i \leq 24)
$$
where $\lambda$ is a regularization parameter. The purpose of this approach is to explore the potential for enhancing the fine-tuning performance by improving the submodules of BERT in their ability to linearly encode tree structures. If there exists a submodule that achieves both enhancement in linear encoding capabilities and improved fine-tuning performance, it implies that the parameter space of this submodule, which has better linear encoding abilities, overlaps with the optimization space of fine-tuning. This intersection is smaller than the optimization space of direct fine-tuning, reducing susceptibility to local optima and leading to improved fine-tuning results. 

Conversely, if enhancing certain submodules hinders fine-tuning or even leads to its failure, it suggests that the submodule's parameter space, which has better linear encoding abilities, does not overlap with the optimization space of fine-tuning. This indicates that the submodule has already attained the smallest $\mathcal{X}{ssp}$ value that greatly benefits the BERT's performance.

Based on f1, we can infer that $M_{14}$ to $M_{18}$ are not suitable as enhanced submodules. According to c5, the submodules most likely to improve fine-tuning performance after enhancement should be near $M_{8}$. We conducted experiments on a single-sentence task called the Corpus of Linguistic Acceptability (CoLA) \citep{DBLP:journals/tacl/WarstadtSB19}, which is part of The General Language Understanding Evaluation (GLUE) benchmark \citep{DBLP:conf/iclr/WangSMHLB19}.

The test results are shown in Table \ref{tab:3}. As predicted earlier, enhancing the submodules around $M_{14}$ to $M_{18}$ (now expanded to $M_{12}$ to $M_{19}$) proves to be detrimental to fine-tuning, resulting in failed performance. However, we did observe an improvement in fine-tuning performance for the sub-module $M_{10}$ near $M_{8}$ after enhancement. This gives us an intuition that if we have additional topological probes and similar metrics to $\mathcal{X}{ssp}$ and $\mathcal{X}{sp}$, we can explore enhancing submodules that are in the rising phase of true $\mathcal{X}{sp}$, away from the boundary of unbiased $\mathcal{X}{sp}$ and $\mathcal{X}{ssp}$, in an attempt to improve fine-tuning outcomes.

\begin{table}[htbp]
  \centering
  \begin{tabular}{cccc}
  \hline
  \textbf{Method} & \textbf{mean} & \textbf{std} & \textbf{max} \\
  \hline
   DF & 63.34 & 1.71 & 66.54 \\
   EH $M_{3}$ & 63.90 & 2.66 & 68.73 \\
   EH $M_{5}$ & 63.90 & 1.36 & 66.04 \\
   EH $M_{10}$ & \textbf{64.87} & 2.07 & \textbf{68.47} \\
   EH $M_{12}$$\sim$$M_{19}$  & 0.00 & 0.00 & 0.00 \\
   EH $M_{20}$ & 5.48 & 16.46 & 54.87 \\
   EH $M_{24}$ & 40.43 & 26.60 & 62.52 \\
  \hline
  \end{tabular}
\caption{\label{tab:3} Direct fine-tuning and sub-module enhancement test scores. Here, "DF" denotes direct fine-tuning, while "EH $M_{i}$" represents the fine-tuning with the enhancement of $M_{i}$ based on $\mathcal{X}{ssp}$. The evaluation metric used in CoLA is the Matthew coefficient, where a higher value indicates better performance.}
\end{table}

\section{Conclusion}

Consider a thought experiment where there is a planet in a parallel universe called "Vzjgs" with a language called "Vmtprhs". Like "English", "Vmtprhs" comprises 26 letters as basic units, and there is a one-to-one correspondence between the letters of "Vmtprhs" and "English". Moreover, these two languages are isomorphic under letter permutation operations. In other words, sentences in "English" can be rearranged so that they are equivalent to sentences in "Vmtprhs", while preserving the same meaning. If there were models like BERT or GPT in the "Vzjgs" planet, perhaps called "YVJIG" and "TLG," would the pretraining process of "YVJIG" on "Vmtprhs" be the same as BERT's pretraining on "English"?

In theory, there should be no means to differentiate between these two pretraining processes. For a blank model (without any training), extracting useful information from "Vmtprhs" and "English" would pose the same level of difficulty. However, it is true that "Vmtprhs" and "English" are distinct, with the letters of "Vmtprhs" possibly having different shapes or being the reverse order of the "English" alphabet. Therefore, we can say that they have different letter features, although this feature seems to be a mere coincidence. In natural language, there are many such features created by historical contingencies, such as slang or grammatical exceptions. Hence, when we aim to interpret the mechanisms of these black-box models by studying how language models represent language-specific features, we must consider which features are advantageous for interpretation and what we ultimately hope to gain from this research.

This paper presents a thorough exploration of a key issue, specifically examining the articulation of internal feature constraints. By enveloping the original feature within a feature space that adheres to such constraints, it is possible to effectively eliminate any unintended or accidental components. Within this explicitly defined feature space, metrics such as $\mathcal{X}{ssp}$ and $\mathcal{X}{essp}$ can be defined. We can subsequently examine the evolution of these metrics within the model to gain a deeper understanding of the encoding strategies employed by the model for the original feature, as described in the experimental section of this paper. Once we understand the encoding strategies employed by the model, we can investigate the reasons behind their formation and the benefits they bring to the model. By conducting studies on multiple similar features, we can gain a comprehensive understanding of the inner workings of the black box.

\section*{Limitations}
The main limitation of this research lies in the approximate construction of $mins_{W}$ and $maxs_{W}$, which leads to true $-log(\mathcal{X}{sp})$ surpassing $-log(\mathcal{X}{ssp})$ near $M_{16}$ to some extent. However, this may also be due to their proximity, resulting in fluctuations within the training error. On the other hand, the proposed construction scheme for the topological probe discussed in this paper lacks sufficient mathematical formalization. One possible approach is to restate it using the language of category theory.

\section*{Acknowledgements}
We thank the anonymous reviewers for their helpful comments and suggestions. This work was supported by National Natural Science Foundation of China (Nos. 62362015, 62062027 and U22A2099) and the project of Guangxi Key Laboratory of Trusted Software.

\bibliography{custom}
\bibliographystyle{acl_natbib}

\appendix

\section{Proof of Theorem 1}
\label{sec:appendixA}
\begin{proof}[Proof]
For any sequence $x \in X_{C_{dep}}$ that is in the same order as $\bm{p}dep_{H}$, according to the inequality of rankings, for any permutation $\pi_{x}$ of $x$, we have:
$$
\sum_{i=1}^{L_{H}}\pi_{x}(w_{i})*\bm{p}dep(h_{i})\leq \sum_{i=1}^{L_{H}}x(w_{i})*\bm{p}dep(h_{i})
$$
Therefore,
$$
\sum_{i=1}^{L_{H}}(\pi_{x}(w_{i}) - \bm{p}dep(h_{i}))^{2}
$$
$$
\geq \sum_{i=1}^{L_{H}}(x(w_{i}) - \bm{p}dep(h_{i}))^{2}
$$
Since $pesu_{W}$ and $\bm{p}dep_{H}$ are in the same order, we just need to prove that any sequence $x \in X_{C_{dep}}$ and in the same order as $\bm{p}dep_{H}$ satisfies
$$
\sum_{i=1}^{L_{H}}(x(w_{i}) - \bm{p}dep(h_{i}))^{2}
$$
$$
\geq \sum_{i=1}^{L_{H}}(pesu(w_{i}) - \bm{p}dep(h_{i}))^{2}
$$
The theorem is automatically established. Because
$$
\sum_{i=1}^{L_{H}}(pesu(w_{i}) - \bm{p}dep(h_{i}))^{2}
$$
\begin{equation}\label{eq18}
=\sum_{i=1}^{L_{H}}(pre(w_{i}) - \bm{ap}dep_{i})^{2}
\end{equation},
without loss of generality, we can assume $pesu_{W}$ and $x$ to be ascending sequences and not equal and exist a $k$ such that when $1\leq i\leq k-1$
\begin{equation}\label{eq19}
pesu(w_{i}) = x(w_{i})
\end{equation}
and
\begin{equation}\label{eq20}
pesu(w_{k}) \neq x(w_{k})
\end{equation}
Based on the recursive condition, we can infer that
\begin{equation}\label{eq21}
|pesu(w_{k}) - x(w_{k})|=1
\end{equation}
Combined with the value condition of $bias_{k-1}$, we further find that
$$
|pesu(w_{k}) - \bm{ap}dep_{k}| \leq |x(w_{k}) - \bm{ap}dep_{k}|
$$
The inductive hypothesis when $i=m$ is
$$
|pesu(w_{m}) - \bm{ap}dep_{m}| \leq |x(w_{m}) - \bm{ap}dep_{m}|
$$
Due to the condition $\bm{ap}dep_{m+1}-\bm{ap}dep_{m} \leq 1$ and the value condition of $bias_{m}$, it still holds when $i=m+1$ that
$$
|pesu(w_{m+1}) - \bm{ap}dep_{m+1}|
$$
$$
\leq |x(w_{m+1}) - \bm{ap}dep_{m+1}|
$$
Thus, when $i\geq k$
$$
(x(w_{i}) - \bm{p}dep(h_{i}))^{2}\geq (pesu(w_{i}) - \bm{p}dep(h_{i}))^{2}.
$$
\end{proof}

\section{Analysis of Tree Depth Minimum}
\label{sec:appendixB}
The minimum of $pesu_{W}$ is denoted as $dep_{min}$. Fixing $pesu_{W}$, we let all sets (or sequences) of vectors satisfying the following conditions compose a set denoted by $\Omega_{pesu_{W}}$.
$$
\exists \, P\in R^{m*n}, \, \forall \, i(i=1, 2,\cdots L_{H})
$$
$$
pesu(w_{i})-\sqrt{\epsilon_{i}} < h_{i}^{T}P^{T}Ph_{i} < pesu(w_{i})+\sqrt{\epsilon_{i}}
$$
Here, $\epsilon_{i} \ll pesu(w_{i})^{2}$. Let $pesu(w_{1})$ be $dep_{min}$ and $pesu(w_{i})\leq pesu(w_{i+1})(i=1,2,\cdots L_{H}-1)$ without loss of generality, and the following theorem can be obtained.
\begin{thm}
\label{thm2}
For any two different sequences $pesu_{W}$ and $pesu_{W}^{'}$, if $pesu(w_{1}) > 0$ and $pesu^{'}(w_{1}) > 0$. there is a one-to-one mapping $\phi$ between $\Omega_{pesu_{W}}$ and $\Omega_{pesu_{W}^{'}}$.
\end{thm}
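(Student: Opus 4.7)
The plan is to construct an explicit bijection $\phi: \Omega_{pesu_{W}} \to \Omega_{pesu_{W}^{'}}$ by componentwise radial scaling of the vectors. Since $pesu(w_{1}) > 0$ and $pesu^{'}(w_{1}) > 0$, and both sequences are non-decreasing (as stipulated without loss of generality before the theorem), every $pesu(w_{i})$ and $pesu^{'}(w_{i})$ is strictly positive. Hence the ratios $r_{i} := \sqrt{pesu^{'}(w_{i})/pesu(w_{i})}$ are well-defined positive reals, and I would define $\phi(H) = H^{'}$ by setting $h_{i}^{'} := r_{i} h_{i}$ for every $i = 1, 2, \ldots, L_{H}$.

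To see that $\phi$ sends $\Omega_{pesu_{W}}$ into $\Omega_{pesu_{W}^{'}}$, suppose $H \in \Omega_{pesu_{W}}$ is witnessed by a matrix $P$ and tolerances $\{\epsilon_{i}\}$. A direct computation gives $(h_{i}^{'})^{T} P^{T} P h_{i}^{'} = r_{i}^{2} \, h_{i}^{T} P^{T} P h_{i} = (pesu^{'}(w_{i})/pesu(w_{i})) \cdot h_{i}^{T} P^{T} P h_{i}$. Applying the original bounds places this quantity in the open interval $(pesu^{'}(w_{i}) - r_{i}^{2}\sqrt{\epsilon_{i}}, \, pesu^{'}(w_{i}) + r_{i}^{2}\sqrt{\epsilon_{i}})$, which matches the form required for membership in $\Omega_{pesu_{W}^{'}}$ upon setting $\epsilon_{i}^{'} := r_{i}^{4} \epsilon_{i} = (pesu^{'}(w_{i})/pesu(w_{i}))^{2} \epsilon_{i}$. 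The smallness requirement $\epsilon_{i}^{'} \ll pesu^{'}(w_{i})^{2}$ then reduces algebraically to $\epsilon_{i} \ll pesu(w_{i})^{2}$, which holds by hypothesis. Therefore $H^{'} \in \Omega_{pesu_{W}^{'}}$, witnessed by the same $P$.

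Bijectivity follows by symmetry: swapping the roles of $pesu$ and $pesu^{'}$ yields a map $\psi: \Omega_{pesu_{W}^{'}} \to \Omega_{pesu_{W}}$ given by $h_{i} := (1/r_{i}) h_{i}^{'}$, and since $r_{i} \cdot (1/r_{i}) = 1$ componentwise, both compositions $\psi \circ \phi$ and $\phi \circ \psi$ act as the identity on the underlying vector sets. Hence $\phi$ is the desired one-to-one correspondence.

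The main obstacle is the careful handling of the tolerance parameters $\{\epsilon_{i}\}$, since the scaling forces them to transform as well. One must be explicit about whether $\Omega_{pesu_{W}}$ is interpreted as being parameterized by a fixed tolerance profile or as the union over all admissible profiles; under the latter reading the bijection is immediate, while under the former one must verify that the rescaled $\epsilon_{i}^{'}$ still satisfies the smallness condition, which follows directly from the $\ll$ hypothesis. Aside from this bookkeeping, the argument is a single-line scaling identity, and the essential content of the theorem is the positivity assumption $pesu(w_{1}) > 0$, without which the ratios $r_{i}$ would be undefined and the geometric equivalence would break down (explaining the qualitative gap between a root depth of $0$ and any positive choice).
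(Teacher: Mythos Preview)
Your proof is correct and follows essentially the same componentwise radial-scaling idea as the paper. The only minor difference is a normalization choice: the paper fixes $h_{1}^{'}=h_{1}$ and instead rescales the witness matrix to $P^{'}=\sqrt{pesu^{'}(w_{1})/pesu(w_{1})}\,P$, whereas you scale every $h_{i}$ by $r_{i}$ and keep the same $P$ --- your variant is slightly cleaner, but the underlying argument and the resulting tolerance transformation $\epsilon_{i}^{'}=(pesu^{'}(w_{i})/pesu(w_{i}))^{2}\epsilon_{i}$ are identical.
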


\begin{proof}[Proof]
We construct $\phi$ such that
$$
\forall H\in \Omega_{pesu_{W}}
$$
$$
\phi(H)=(h_{1}^{'}, h_{2}^{'},\cdots h_{L_{H}}^{'})=H^{'} \in \Omega_{pesu_{W}^{'}}
$$
Here, $h_{1}^{'}=h_{1}$, and when $i=2, 3 \cdots, L_{H}$
$$
h_{i}^{'}=\frac{\sqrt{pesu^{'}(w_{i})*pesu(w_{1})}}{\sqrt{pesu(w_{i})*pesu^{'}(w_{1})}}h_{i}
$$
Since $$\exists P\in R^{m*n}, \forall i(i=1, 2,\cdots L_{H})$$
$$
pesu(w_{i})-\sqrt{\epsilon_{i}} < h_{i}^{T}P^{T}Ph_{i} < pesu(w_{i})+\sqrt{\epsilon_{i}}
$$
$$
\epsilon_{i} \ll pesu(w_{i})^{2}
$$
Let $P^{'}=\frac{\sqrt{pesu^{'}(w_{1})}}{\sqrt{pesu(w_{1})}}P$ and when $i=1, 2, \cdots L_{H}$
$$
\epsilon_{i}^{'}=(\frac{pesu^{'}(w_{i})}{pesu(w_{i})})^{2}\epsilon_{i},
$$
then
$$
\epsilon_{i}^{'}\ll (\frac{pesu^{'}(w_{i})}{pesu(w_{i})})^{2}pesu(w_{i})^{2} =pesu^{'}(w_{i})^{2}
$$
After calculation,
$$
\forall i(i=1, 2,\cdots L_{H})
$$
$$
pesu^{'}(w_{i})-\sqrt{\epsilon_{i}^{'}}
$$
$$
<(h_{i}^{'})^{T}(P^{'})^{T}P^{'}h_{i}^{'}
$$
$$
<pesu^{'}(w_{i})+\sqrt{\epsilon_{i}^{'}}
$$
Therefore, $\phi$ is well defined, and $\forall H_{i}, H_{j}\in \Omega_{pesu_{W}}$ when $H_{i} \neq H_{j}$
$$
\phi(H_{i})\neq \phi(H_{j})
$$
Therefore, $\phi$ is also an injective function. It is easy to prove that the inverse map $\phi^{-1}$ of $\phi$ is also an injective function and satisfies the above conditions.
\end{proof}
The proof of the theorem above does not apply to the cases where $pesu(w_{1})=0$ or $pesu^{'}(w_{1})=0$. If $dep_{min}$ is greater than $0$, then the results of the tree topological probe do not necessarily depend on the selection of $dep_{min}$, and we may set it as $1$. However, we have not further explored whether Theorem \ref{thm2} is necessarily invalid. Nevertheless, we can examine the drawbacks that arise from setting $dep_{min}$ to $0$ from another perspective.

When $i\geq 2$, $h_{i}$ is projected by $P$ near the ($m$)-dimensional sphere with a radius of $\sqrt{pesu(w_{i})}$,
$$
\forall i=1, 2 \cdots, L_{H}
$$
$$
|h_{i}^{T}P^{T}Ph_{i}-pesu(w_{i})|<\epsilon_{i}
$$
If $dep_{min}=0$, then the topology of the geometric space composed of all vectors $Ph_{1}$ satisfying $|h_{1}^{T}P^{T}Ph_{1}-dep_{min}|<\epsilon_{1}$ is homeomorphic to an $m$-dimensional open ball. This may result in probes exhibiting different preferences for the root and other nodes. However, if $dep_{min}>0$, the topology of the geometric space is an $m$-dimensional annulus, which is the same for all nodes, thus avoiding the issue of preference.

\section{Data for Training and Evaluating Probes}
\label{sec:appendixC}
To ensure the reliability and diversity of data (appropriate sentences) sources, we separated the sentences participating in the probe experiment from the training, verification and test data sets of some tasks of The General Language Understanding Evaluation (GLUE) benchmark \citep{DBLP:conf/iclr/WangSMHLB19}. 

We selected four small sample text classification tasks in GLUE with reference to \citep{DBLP:conf/naacl/HuaLDXL21}, namely, the Corpus of Linguistic Acceptability (CoLA) \citep{DBLP:journals/tacl/WarstadtSB19}, Microsoft Research Paraphrase Corpus (MRPC) \citep{DBLP:conf/acl-iwp/DolanB05}, Recognizing Textual Entailment (RTE) \citep{DBLP:conf/iclr/WangSMHLB19} and Semantic Textual Similarity Benchmark (STS-B) \citep{DBLP:journals/corr/abs-1708-00055}, which cover the three major task types of SINGLE-SENTENCE, SIMILARITY AND PARAPHRASE and INFERENCE in GLUE. MRPC, RTE and STS-B are all double sentence tasks, and the experiment needs only BERT to represent a single sentence; thus, we consider two sentences that belong to the same group of data independently, not spliced. 

After the data sets of the four tasks are processed as above, the remaining statements are merged into a raw text data set $rtd_{mix}$, which contains 47136 sentences. This is close to the size of the Pennsylvania tree database \citep{DBLP:journals/coling/MarcusSM94} used by the structural probe \citep{DBLP:conf/naacl/HewittM19}; short and long sentences are evenly distributed.

\section{Experimental Setup for Training Probes and Fine-tuning}
\label{sec:appendixD}
We use the BERT implementation of \citeauthor{DBLP:journals/corr/abs-1910-03771} and set the rank of the probe matrix to be half the embedding dimension. The probe matrix is randomly initialized following a uniform distribution $U(-0.05, 0.05)$. 

We employ the AdamW optimizer with the warmup technique, where the initial learning rate is set to 2e-5 and the epsilon value is set to 1e-8. The training stops after 10 epochs. The training setup for fine-tuning experiments is similar to that of training probes. One notable difference is the regularization coefficient $\lambda$, which is dynamically determined after one epoch of training, ensuring that $\frac{\lambda * \mathcal{X}_{ssp}(M_{i})}{T(M_{24})}\approx 0.1$, without any manual tuning.

We conduct experiments on each fine-tuning method by using 10 different random seeds, and we compute the mean, the standard deviation (std), and the maximum values.

\section{Supplementary Chart Materials}
\label{sec:appendixE}
Table \ref{tab:4} lists the exact measurements of $\mathcal{X}{ssp}$, $\mathcal{X}{essp}$, and true $\mathcal{X}_{sp}$ for BERT-Large.

\begin{table}[htbp]
  \centering
  \begin{tabular}{cccc}
  \hline
  \textbf{$M$} & \textbf{$\mathcal{X}_{ssp}$} & \textbf{$\mathcal{X}_{essp}$} & \textbf{$\mathcal{X}_{tsp}$}\\
  \hline
	$M_{0}$	&	0.039	&	5.382	&	0.3084	\\
	$M_{1}$	&	0.017	&	0.536	&	0.2644	\\
	$M_{2}$	&	0.017	&	0.526	&	0.244	\\
	$M_{3}$	&	0.018	&	0.348	&	0.2016	\\
	$M_{4}$	&	0.033	&	0.351	&	0.1701	\\
	$M_{5}$	&	0.025	&	0.52	&	0.1622	\\
	$M_{6}$	&	0.023	&	0.52	&	0.1559	\\
	$M_{7}$	&	0.013	&	0.345	&	0.14	\\
	$M_{8}$	&	0.01	&	0.347	&	0.1424	\\
	$M_{9}$	&	0.011	&	0.352	&	0.1577	\\
	$M_{10}$	&	0.013	&	0.359	&	0.1415	\\
	$M_{11}$	&	0.021	&	0.375	&	0.1128	\\
	$M_{12}$	&	0.054	&	0.391	&	0.0975	\\
	$M_{13}$	&	0.076	&	0.42	&	0.0764	\\
	$M_{14}$	&	0.084	&	0.467	&	0.0651	\\
	$M_{15}$	&	0.088	&	0.525	&	0.0616	\\
	$M_{16}$	&	0.09	&	0.663	&	0.0656	\\
	$M_{17}$	&	0.086	&	0.785	&	0.0808	\\
	$M_{18}$	&	0.09	&	0.883	&	0.1155	\\
	$M_{19}$	&	0.09	&	0.999	&	0.1416	\\
	$M_{20}$	&	0.092	&	1.045	&	0.1615	\\
	$M_{21}$	&	0.094	&	1.447	&	0.2468	\\
	$M_{22}$	&	0.102	&	1.715	&	0.28634	\\
	$M_{23}$	&	0.107	&	1.709	&	0.3171	\\
	$M_{24}$	&	0.113	&	1.837	&	0.328	\\
  \hline
  \end{tabular}
\caption{\label{tab:4} Exact values of $\mathcal{X}_{ssp}$, $\mathcal{X}_{essp}$, and true $\mathcal{X}_{sp}$ for $M_{i}$}
\end{table}

\section{Experimental data for RoBERTa-large}
\label{sec:appendixF}
Figure \ref{fig: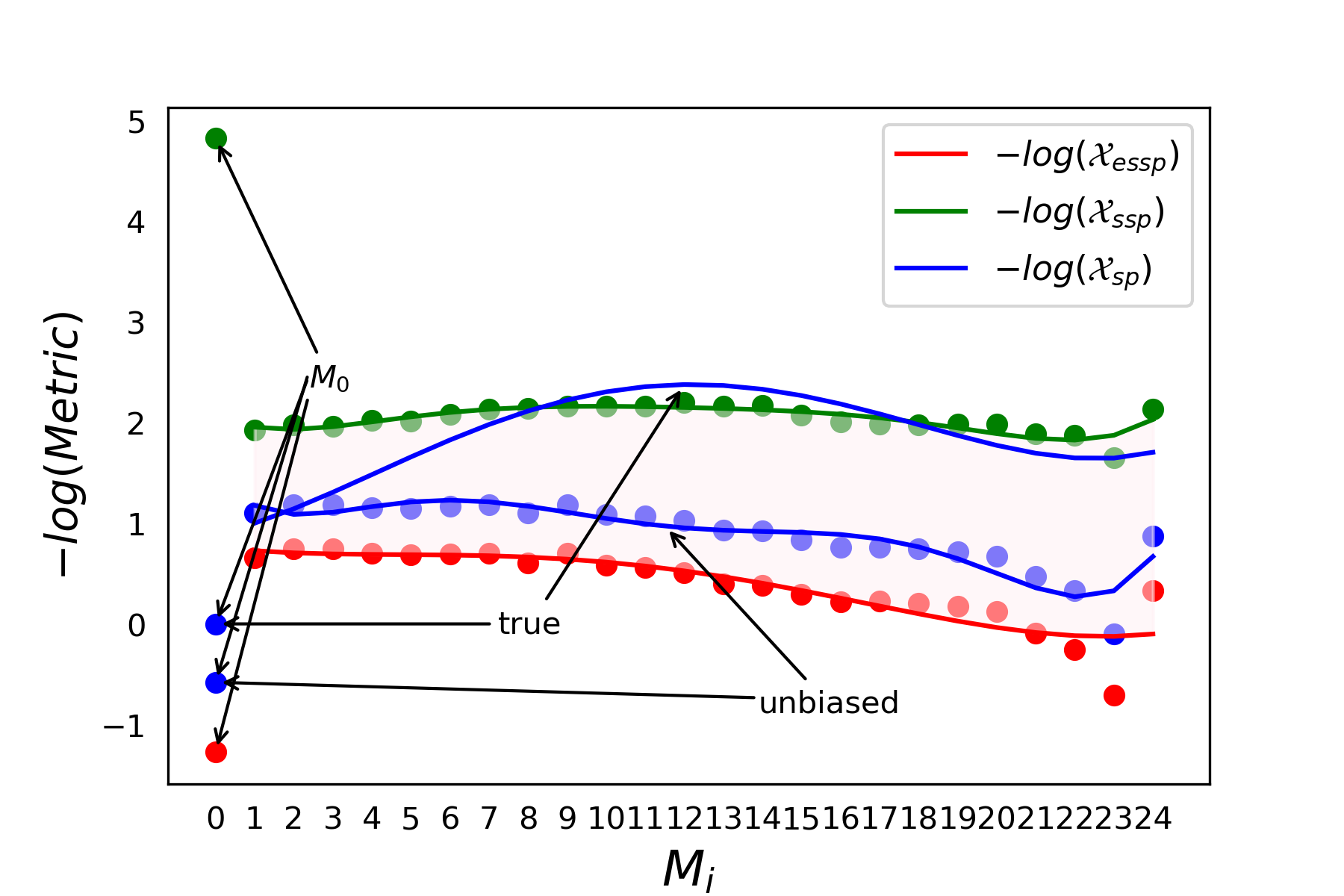} shows the negative logarithm curves of three measurement values as a function of variation in Mi for RoBERTa-Large. Table \ref{tab:5} lists the exact measurements of $\mathcal{X}{ssp}$, $\mathcal{X}{essp}$, and true $\mathcal{X}_{sp}$ for RoBERTa-Large.

\begin{figure}[htp]
  \centering
  \includegraphics[scale=0.55]{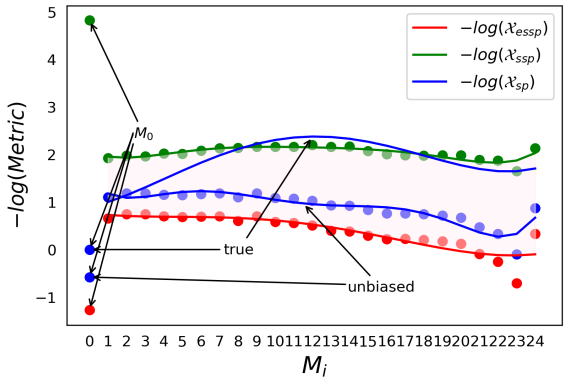}
  \caption{Negative logarithm of $\mathcal{X}_{ssp}$, $\mathcal{X}_{essp}$, unbiased $\mathcal{X}_{sp}$ and true $\mathcal{X}_{sp}$ across $M_{i}$.}
  \label{fig:log_metric2.png}
\end{figure}

From the experimental data, it is evident that the overall pattern of evolution in the graphs for RoBERTa-Large and BERT-Large is consistent. There's a slight initial increase followed by a decline, but the boundaries for $\mathcal{X}{sp}$ in the case of RoBERTa-Large are much tighter, especially in the earlier modules.

\begin{table}[htbp]
  \centering
  \begin{tabular}{cccc}
  \hline
  \textbf{$M$} & \textbf{$\mathcal{X}_{ssp}$} & \textbf{$\mathcal{X}_{essp}$} & \textbf{$\mathcal{X}_{tsp}$}\\
  \hline
	$M_{0}$	&	0.008	&	3.532	&	0.991	\\
	$M_{1}$	&	0.145   &   0.515   &   0.243	\\
	$M_{2}$	&	0.137   &   0.469   &   0.446	\\
	$M_{3}$	&	0.139   &   0.470   &   0.331	\\
	$M_{4}$	&	0.131   &   0.493   &   0.257	\\
	$M_{5}$	&	0.132   &   0.500   &   0.199	\\
	$M_{6}$	&	0.123   &   0.494   &   0.153	\\
	$M_{7}$	&	0.117   &   0.491   &   0.110	\\
	$M_{8}$	&	0.117   &   0.542   &   0.109   \\
	$M_{9}$	&	0.114   &   0.491   &   0.087	\\
	$M_{10}$	&	0.113   & 0.555     &   0.091	\\
	$M_{11}$	&	0.113   & 0.567     &   0.091	\\
	$M_{12}$	&	0.110   &   0.598   &   0.094   \\
	$M_{13}$	&	0.114   &   0.667   &   0.101	\\
	$M_{14}$	&	0.113   &   0.675   &   0.102	\\
	$M_{15}$	&	0.124   &   0.738   &   0.120   \\
	$M_{16}$	&	0.133   &   0.797   &   0.136	\\
	$M_{17}$	&	0.136   &   0.789   &   0.131	\\
	$M_{18}$	&	0.137   &   0.807   &   0.136	\\
	$M_{19}$	&	0.136   &   0.831   &   0.135	\\
	$M_{20}$	&	0.136   &   0.880   &   0.145	\\
	$M_{21}$	&	0.150	&	1.086	&	0.169	\\
	$M_{22}$	&	0.153	&	1.277	&	0.176	\\
	$M_{23}$	&	0.190	&	2.006	&	0.197	\\
	$M_{24}$	&	0.117	&	0.711	&	0.201	\\
  \hline
  \end{tabular}
\caption{\label{tab:5} Exact values of $\mathcal{X}_{ssp}$, $\mathcal{X}_{essp}$, and true $\mathcal{X}_{sp}$ for $M_{i}$}
\end{table}

\end{document}